\newtheorem{theorem}{Theorem}
\newtheorem{corollary}{Corollary}[theorem]
\title{A rank decomposition for the topological classification of neural representations}
\author[1]{Kosio Beshkov}
\author[1,2,3]{Gaute T. Einevoll}
\affil[1]{Department of Biosciences, University of Oslo, Oslo, Norway}
\affil[2]{Department of Physics, University of Oslo, Oslo, Norway}
\affil[3]{Department of Physics, Norwegian University of Life Sciences, \AA s, Norway}
\date{}
\begin{document}

\maketitle

\begin{abstract}
    Neural networks can be thought of as applying a transformation to an input dataset. The way in which they change the topology of such a dataset often holds practical significance for many tasks, particularly those demanding non-homeomorphic mappings for optimal solutions, such as classification problems. In this work, we leverage the fact that neural networks are equivalent to piecewise-linear maps, whose local rank can be used to pinpoint regions in the input space that undergo non-homeomorphic transformations, leading to alterations in the topological structure of the input dataset. Our approach enables us to make use of the relative homology sequence, with which one can study the homology groups of the quotient of a space $\mathcal{M}$ and a subset $A$, assuming some minimal properties on these spaces. 
    
    As a proof of principle, we empirically investigate the presence of low-rank (topologically-destructive) affine maps as a function of network width and mean weight. We show that in randomly initialized narrow networks, there will be regions in which the homology groups of a data manifold can change. As the width increases, the homology groups of the input manifold become more likely to be preserved. We end this part of our work by constructing highly non-random wide networks that do not have this property and relating this non-random regime to Dale's principle, which is a defining characteristic of biological neural networks.
    Finally, we study simple feedforward networks trained on MNIST, as well as on toy classification and regression tasks, and show that networks manipulate the topology of data differently depending on the continuity of the task they are trained on.

\end{abstract}

\section{Introduction}

%1. Homeomorphisms are important in ML and Neuroscience
Imagine gazing out the window of a moving train. As it travels, we continuously take snapshots of the outside landscape. In some approximate sense, we can say that the visual structure of the world is being projected continuously onto our retina. Despite this, what captures our attention are the snapshots that are of particular interest to us, like an interesting building, an animal in the distance, or seeing the sun set over a large mountainous range. Given this duality in our perception, at some level, the neural networks of our brains have to hold both a continuous representation of the smoothly shifting scene as well as a discrete representation of the separate objects in each scene. Classifying on which side of this duality a particular neural network will fall into is yet to be understood.

To explore this question further, we have to formalize this intuitive picture. We can say that we take samples $\{x_1,x_2,...,x_N\}$ of some manifold $\mathcal{M}$ and then think of neural networks as applying a sequence of non-linear transformations to this manifold $\Phi:\mathcal{M} \to \mathbb{R}^N$. Depending on the architecture and the parameters of the neural network, this map can, loosely speaking, either be structure-preserving (homeomorphic) or structure-destroying (non-homeomorphic). Both types of mappings have their place in different types of tasks in both neuroscience and machine learning. Regression to a smooth function is an example in which employing a homeomorphic mapping seems like a quite sensible solution. Whereas classification tasks demand the removal of structure until only a few discrete points remain and therefore might benefit from a non-homeomorphic mapping.

The existence of these different types of solutions is hinted at by the structure of the activation functions we observe in systems implementing both biological as well as artificial intelligence. If one looks at the F-I curve of biological neurons \cite{izhikevich2007dynamical} or at the activation functions used in most modern artificial neural networks, it is clear that these functions send several inputs to the same point (usually 0). Therefore, they are not homeomorphic and specifically fail to be injective (one-to-one). As a result, one way in which neural networks can change the topological structure of an input manifold is by projecting several of its subsets to zero or to a subspace of lower dimension. To put this differently, a network can essentially glue different pieces of an input manifold together by projecting them to lower-dimensional subsets. Under this view, part of the question of how a particular input manifold can be transformed by an arbitrary neural network reduces to the question of what manifolds can be constructed by sequentially quotiening out submanifolds determined by the parameters of a network. This might not be an easy question in its own right, but it hopefully provides an interesting new vantage point from which to view neural networks.

%2. Which networks are capable of implementing non-homeomorphic maps?
The first step towards developing this type of understanding is to determine whether we can say something about which architectures and parameter regimes will lead to either a homeomorphic or non-homeomorphic mapping. To our knowledge, until now, this question has not been thoroughly explored. Let us first consider a network in which each layer is wider than the input dimension and all the weight matrices have a rank higher or equal to this dimension. Clearly, if we either choose a smooth activation function or take it away altogether, a neural network will apply a composition of smooth affine maps, which cannot change the topology of the input space. However, as we will show, by combining affine maps with a non-injective activation, both homeomorphic as well as non-homeomorphic mappings become possible.

We explore these questions by exploiting the idea of the polytope (often called polyhedral) decomposition of the input space of a neural network that is based in the field of hyperplane arrangements \cite{stanley2004introduction}, which by now has made a notable impact within machine learning \cite{huchette2023deep}. It has been used in the context of quantifying expressivity \cite{raghu2017expressive}, for estimating homology groups in combination with persistent homology \cite{liu2023relu}, and for many theoretical results \cite{pascanu2013number, rahaman2019spectral, hanin2019deep, rolnick2020reverse,masden2022algorithmic, grigsby2022functional, grigsby2022transversality}. Using this decomposition, it has been shown that the class of ReLU neural networks is exactly the same as the class of piecewise-linear functions with finite regions \cite{arora2016understanding, he2018relu, grigsby2022functional}. Therefore, one can express the action of a neural network as a piecewise linear map on the input space, through which we can further define another decomposition in terms of the rank of this map over the input. We show that topology can change exactly in regions with a rank smaller than the minimum embedding dimension of the input manifold. There is also the possibility that a piecewise linear map changes topology by cutting the input manifold along the boundaries of its pieces, we leave the study of this type of behavior for future work.

Following this, we introduce the idea of a \textit{representation classifying space} of an input manifold, which makes use of the relative homology groups of the input manifold with respect to subspaces that fall in the low-rank regions of the polytope decomposition. We analytically compute some examples with very simple manifolds. This is followed up by empirically studying randomly initialized narrow and wide neural networks, for which we provide conditions on the width and the average weight that determine whether such networks will implement homeomorphic or non-homeomorphic maps. We also explore the impact on input topology in networks following variations of Dale's principle. Finally, we examine which regime networks trained to solve regression and classification tasks converge to. We hope that this work provides an interesting perspective and opens up the path towards the application of more advanced topological methods in the study of neural networks.

\section{Studying the topology of neural representations}
\subsection{Neural networks and decompositions}
Given a manifold $\mathcal{M}$ embedded in $\mathbb{R}^m$, with $m$ being its minimal embedding dimension, and a neural network with L layers with corresponding widths $\{n_1,n_2,...,n_L\}$, we will represent a neural network by the following diagram:

\begin{equation}
    \begin{tikzcd}[row sep=small,column sep=small]
        \mathcal{M} \subset \mathbb{R}^m \arrow[r, "T_1"] & \mathbb{R}^{n_1} \arrow[r, "T_2"] & ... \arrow[r, "T_L"] & \mathbb{R}^{n_L}.
    \end{tikzcd}
\end{equation}

The maps $T_i: \mathbb{R}^{n_{i-1}} \to \mathbb{R}^{n_i}$ operate on an input $x$ by $T_i(x) = \text{max}\{0,W_ix+b_i\}$. In essence this mapping applies an affine transformation followed by a ReLU nonlinearity. We will denote the map which applies K of these maps in a sequence by $\Phi^K = T_K \circ T_{K-1} \circ ... \circ T_{1} = \prod\limits_{k=1}\limits^{K}T_k$ and refer to its image as the neural representation at layer $K$.

As mentioned above, it is known that ReLU networks decompose the input space into convex polytopes \cite{huchette2023deep, balestriero2019geometry}. This \textit{polytope decomposition} can be related to the neural representations through the codeword sets (where each codeword corresponds to a different polytope), 
\begin{equation}
C_J^K = \{x | S(\Phi^K(x)) = J\},\\
 S(x) =
\begin{cases}
    1 & \text{if } x > 0, \\
    0 & \text{if } x \leq 0.
\end{cases}
\end{equation}

In this equation $J$ is a binary vector denoting the neurons in layer $K$ which are active and $S(x)$ is a vector function. The quantity $||J||_0$ will be referred to as the \textit{codeword size}. These codewords form a decomposition of the input space at each layer of the network $\mathcal{M} = \bigsqcup\limits_{J}C_J^K$ for any $K$. We can also imagine a more global decomposition which is not layer dependent by defining $J$ over all the neurons in the network instead of layerwise. We call that the \textit{global polytope decomposition} $C_J$. Another feature of this decomposition is that within each codeword set, we can express the map $\Phi^K$ as an affine map $Q_{J_K}(W_K\Phi^{K-1}(x)+b_K)$, where $Q_{J_K}$ is a matrix which is 0 everywhere except on the diagonal $\text{diag}(Q_{J_K}) = J_K$. In a way it essentially substitues the ReLU function at layer $K$ for a piecewise linear map. As one can see this definition is recursive and by going backwards through the first layer, it can be written as,

\begin{equation}
    \label{eq: Phi rank}
    \Phi^K(x) = (\prod\limits_{k=1}\limits^{K} Q_{J_k}W_k)(x)+\sum\limits_{i=1}\limits^{K} \prod\limits_{j=i+1}\limits^{K} (Q_{J_j}W_j) Q_{J_i}b_i.
\end{equation}

Here the products are composed from the left and $Q_{J_k}$ is the diagonal $Q$ matrix at layer k. This is an affine transformation and if we want to study the rank of this map we can ignore the second term as it is simply an offset independent of the input. The rank tells us the dimension of the subspace to which a piece of the input manifold is projected to.\\

While this decomposition is useful and will become important later on, in order to study the topology of neural representations we find it more convenient to look at a slightly different decomposition which keeps track of the dimension of the manifold through the layers of the network, but ignores information about the exact codeword. We will call this the \textit{rank decomposition}, where we define the rank of a neural representation at a point $x$ as $\text{rank}(\Phi^K(x))=\text{rank}(\prod\limits_{k=1}\limits^{K} Q_{J_k}W_k)$. From this we obtain the rank codewords defined as,

\begin{equation}
    \Omega^K_n = \{x | \text{rank}(\Phi^K(x)) = n\}.
\end{equation}

These codewords form the rank decomposition $\mathcal{M} = \bigsqcup\limits_{n=0}\limits^{m}\Omega_n^K$ for any layer K. The advantage of this decomposition is that it allows us to split the input manifold into subsets which are mapped to a subspace of dimension less or equal to the input dimension. We find this to be a more useful perspective when studying when the topology of an input manifold changes in a neural representation. Figure \ref{fig:polytope decomp} shows an intuitive picture of how the polytope and rank decompositions interact with an embedded manifold. Further computational examples can be found in the appendix \ref{fig:other-manifolds}.

\begin{figure}
    \centering
    \includegraphics[scale=0.8]{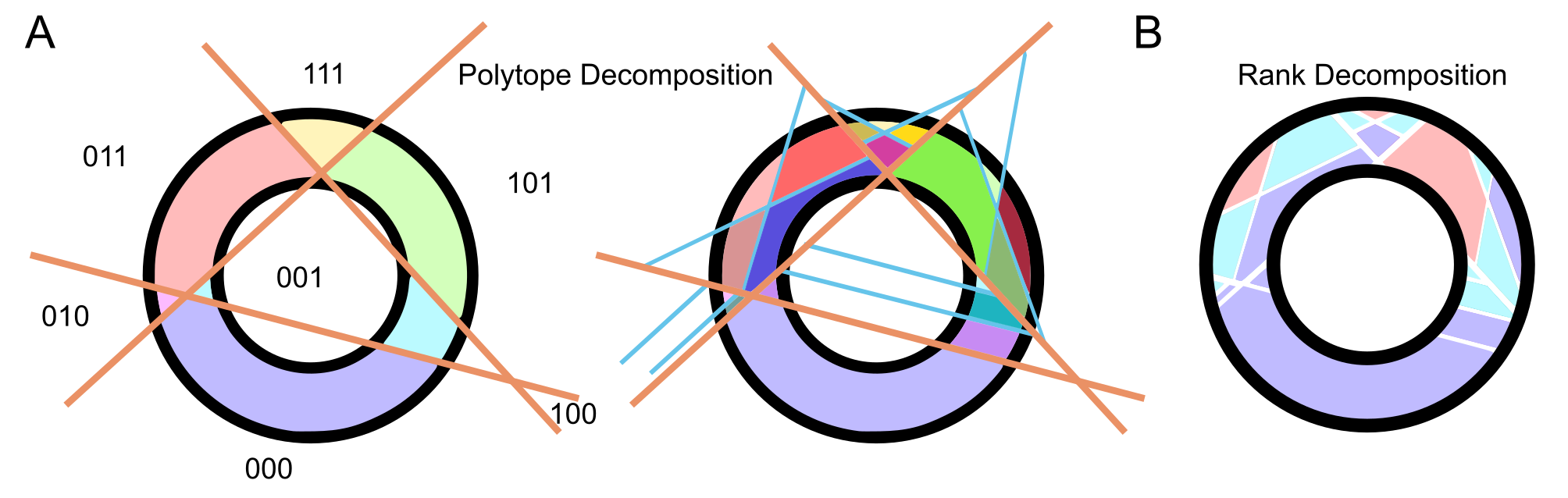}
    \caption{\textbf{A)}: Polytope decomposition at layers 1 and 2 of a neural network with a 2 layer architecture with widths of 3 and 2. \textbf{B)}: Rank decomposition of the same network at the second layer. Different colors correspond to different ranks: purple - rank 0, blue - rank 1 and pink - rank2.}
    \label{fig:polytope decomp}
\end{figure}

\subsection{Non-homeomorphic regions of neural manifolds}
Since full-rank affine maps between vector spaces are continuous (in both directions) and invertible bijections, they induce a homeomorphism between manifolds (equipped with the subspace topology in $\mathbb{R}^n$) embedded in these spaces and are therefore not able to change the topology of a manifold. In the piecewise linear case, this reasoning can be applied within each region. Then one way the topology can change is when such an affine map projects subsets of the manifold to a subspace of a lower dimension than its embedding dimension. This is due to the fact that the projected subsets can no longer be bijectively embedded in this new lower-dimensional space. Such a degenerate map is non-injective and therefore non-homeomorphic. For a full proof see the appendix \ref{Topology destructive regimes}.

In order to classify the types of manifolds that one can get through such mappings, we focus on studying a standard topological invariant, namely homology groups \cite{hatcher2005algebraic}. Assuming one is working with nice (compact and Hausdorff spaces as shown in theorem 2.14 and its corollary 2.15 in \cite{vick2012homology}) input spaces, when the map $\Phi$ between manifolds fails to be injective, studying the homology groups $H(\Phi(\mathcal{M}))$ is equivalent to studying $H(\mathcal{M} / \sim_\Phi)$, where $\sim_\Phi$ contains the equivalence classes induced by the equivalence relation $\Phi(x)=\Phi(y)$ and $\mathcal{M} / \sim_\Phi$ is equipped with the quotient topology.

Since the mapping $\Phi$ is a homeomorphism on all regions $\Omega_n$ of rank equal to $m$, we will only consider what happens to regions of lower rank. Let $\Omega_0^K$ be a rank 0 subset from the input manifold which is projected to a single point. If this is a contractible space, projecting it to a point will not change any of the homology groups. Otherwise, quotiening this space out can have an impact. Both of these cases can be studied through the relative homology sequence \cite{hatcher2005algebraic}, which is defined for a pair of spaces $(X,A)$ with $A \subset X$. If one has a map which is homeomorphic on all points of $\mathcal{M}$ except on $\Omega_0^K$ which is all projected to a point, then the relative homology sequence of the pair $(\mathcal{M}, \Omega_n^K)$ can be studied. Since $\Phi$ is not necessarily such a map, we construct an intermediate map which we define by,

\begin{equation}
    \Psi^K_0(x) = 
    \begin{cases}
        [p] \text{ if } x \in \Omega^K_0, \\
        \text{id otherwise.}  
    \end{cases}
\end{equation}

Here $[p]$ is the equivalence class of all points inside $\Omega^K_0$. A more visual way to think of this map is that it keeps everything the same but glues together all points of $\Omega^K_0$. In other words it maps $\mathcal{M} \to \mathcal{M} / \Omega^K_0 = \Psi_0^K(\mathcal{M})$, which is the precise context in which the relative homology sequence can be applied and is visualized by the diagram,

\begin{equation}
    \begin{tikzcd}[row sep=small,column sep=small]
        ... \arrow[r, "\partial^*"] & H_i(\Omega_0^K) \arrow[r, "\iota^*"] & H_i(\mathcal{M}) \arrow[r, "j^*"] & H_i(\mathcal{M},\Omega_0^K) \arrow[r, "\partial^*"] & H_{i-1}(\Omega_0^K) \arrow[r, "\iota^*"] & ...
    \end{tikzcd}
\end{equation}

Since this sequence is exact, we know that the image of the map at each arrow is equal to the kernel of the map at the next arrow. This property will let us analytically calculate and classify the homology groups of different manifolds as they are propagated through the layers of a network.

\subsubsection*{Contractible and non-contractible $\Omega_0^K$ have different effects on homology groups}
By using the relative homology sequence one can easily show that if $\Omega_0^K$ is contractible, the homology groups of the intermediate map will be the same as the input manifold. Start by using the fact that contractible spaces have zero homology in non-zero degrees. This means that we can fill in the relative homology sequence for $i>1$ in the following way,

\begin{equation}
    \begin{tikzcd}[row sep=small,column sep=small]
        ... \arrow[r, "\partial^*"] & 0 \arrow[r, "\iota^*"] & H_i(\mathcal{M}) \arrow[r, "j^*"] & H_i(\mathcal{M},\Omega_0^K) \arrow[r, "\partial^*"] & 0 \arrow[r, "\iota^*"] & ...
    \end{tikzcd}
\end{equation}

By exactness we can see that $\text{Ker}\partial^*=\text{Im}j^*$ and since the image of $\partial^*$ is empty this is the only set that determines $H_i(\mathcal{M}, \Omega_0^K)$. On the other hand, for non-contractible spaces the image of $\partial^*$ or $\iota^*$ might fail to be empty and can therefore induce a change in $H_i(\mathcal{M}, \Omega^K_0)$. This precise change can be studied through the relative homology sequence in which $H_{i}(\Omega^K_0)$ or $H_{i-1}(\Omega^K_0)$ are non-trivial.

\begin{equation}
    \begin{tikzcd}[row sep=small,column sep=small]
        ... \arrow[r, "\partial^*"] & H_{i}(\Omega^K_0) \arrow[r, "\iota^*"] & H_i(\mathcal{M}) \arrow[r, "j^*"] & H_i(\mathcal{M},\Omega_0^K) \arrow[r, "\partial^*"] & H_{i-1}(\Omega^K_0) \arrow[r, "\iota^*"] & ...
    \end{tikzcd}
\end{equation}

The 0 rank case is quite nice to work with since there is only one unique projection from $\Omega_0^K$ to a point. Higher rank regions $\Omega^K_n$ are projected to an $n$-dimensional linear subspace, and therefore lack this uniqueness property. For rank 1 regions we can still get quite far, since the only difference is that pieces of the input manifold can now be sent to disconnected (but contractible) subsets of a line. In other words, $\Omega_1^K = \bigsqcup V_j$, where each $V_j$ is a different disconnected component of $\Omega_1^K$. Any disconnected component is contractible when projected to a line. Therefore we can construct a set of intermediate maps and apply the same tricks as in the zero dimensional case in a sequence.

\begin{equation}
    \begin{tikzcd}[row sep=small,column sep=small]
    ... \arrow[r, "\partial^*"] & H_{i}(\Omega^K_0) \arrow[r, "\iota^*"] & H_i(\mathcal{M}) \arrow[r, red, "j^*"] & H_i(\mathcal{M},\Omega_0^K) \arrow[r, "\partial^*"] \arrow[ld, red, "id"] & ...\\
    ... \arrow[r, "\partial^*"] & H_{i}(\Omega^K_{(1,0)}) \arrow[r, "\iota^*"] & H_i(\mathcal{M},\Omega^K_0) \arrow[r, red, "j^*"] & H_i(\mathcal{M}/\Omega_0^K,\Omega_{(1,0)}^K) \arrow[r, "\partial^*"] \arrow[ld, red, "id"] & ...\\
    ... \arrow[r, "\partial^*"] & H_{i}(\Omega^K_{(1,1)}) \arrow[r, "\iota^*"] & H_i(\mathcal{M}/\Omega_0^K,\Omega_{(1,0)}^K) \arrow[r, red,  "j^*"] & H_i((\mathcal{M}/\Omega_0^K)/\Omega_{(1,0)}^K,\Omega_{(1,1)}^K) \arrow[r, "\partial^*"] & ...
    \end{tikzcd}
\end{equation}

Looking at the red arrows, it is clear that we are studying the sequence induced by the inclusions of each contractible low rank region $\Omega_0^K \subset \Omega_0^K\cup\Omega_{(1,0)}^K \subset \Omega_0^K\cup\Omega_{(1,0)}^K\cup\Omega_{(1,1)}^K \subset ...$. Intuitively this is the same as sequentially gluing different disjoint regions of $\mathcal{M}$.

It is tempting to keep going with the same approach for higher rank regions, but the problem one runs into is that the projections need not be contractible and therefore the concept of quotiening $\Omega_{n>1}$ does not always apply. For example, the two dimensional projections of a cylinder, can look like a circle, or a contractible space depending on the angle of projection. Solving this problem for arbitrary $n$ seems quite difficult and is out of the scope of the present work. However, if we focus on classifying the cases for which $\Phi(\Omega_n^K)$ maps to a finite set of individually contractible sets, we can get a good idea for just how rich such transformations can be. We study this further and provide a conjecture for how to define such \textit{representation classifying spaces} in the appendix \ref{sec: appendix}.

\section{Combinatorial perspective on codeword domains}
%put arguments about how often one can observe domains of rank lower than dim of manifold with random initializations
%add argument picking the minimal rank of a network by having all positive/negative weights except at k rows. Relate this to inverse Dales' law 
So far we have studied how neural networks can change the topological structure of an input manifold, without any reference to the exact architecture of a network. This approach, while quite general, does not help answer the practical question of how particular architectural choices impact the topology of neural representations. While there are $2^{n_K}$, possible binary codewords that one can construct it is well known from previous work that the number of linear regions at a layer is bounded by a much smaller quantity \cite{pascanu2013number, montufar2014number, arora2016understanding, raghu2017expressive, serra2018bounding}. The tightest upper bound of this number, that we are aware of, is given by Serra et al. \cite{serra2018bounding}. 

Let us naively assume that each codeword region $C^K_J$ has an equal probability of appearing in the polytope decomposition of a neural network initialized from a random distribution. As argued above, the regions capable of changing the topology of an input manifold, which we will call \textit{topologically destructive}, are those of rank lower than the minimal embedding dimension $m$. In principle, there are $\sum\limits_{k=0}\limits^{m-1} \binom{n_K}{k}$ out of $2^{n_K} = \sum\limits_{k=0}\limits^{n_K} \binom{n_K}{k}$ such regions, with the remainder being \textit{topologically preserving}. Since the number of regions is restricted by the bound of Serra et al., in practice it might only be possible to fit a limited number of regions, none of which need to be destructive. Also, since the binomial coefficient for a fixed width $n_K$ is maximized at $k=\lfloor\frac{n_K}{2}\rfloor$ and $k=\lceil\frac{n_K}{2}\rceil$, we would expect that most regions would be near these ranks.

%Thus for a one layer network, through a simple combinatorial argument, there are $2^{n_1} = \sum\limits_{k=0}\limits^{n_1} {n_1 \choose k}$ possible codeword regions, $\sum\limits_{k=0}\limits^{m-1} {n_1 \choose k}$ of which are topologically destructive. Since for one layer the aforementioned bound is tight, the total number of regions which are drawn without replacement is $\sum\limits_{k=0}\limits^{m} {n_1 \choose k}$. Then, after defining $B^m = \sum\limits_{k=0}\limits^{m} {n_1 \choose k}$ the probability of an arbitrary region in the decomposition being destructive is given by the hypergeometric distribution for $k=0$,

%\begin{equation}
%    P(\text{At least one topologically destructive region}) = 1 - \frac{{B^{n_1}-B^{m-1} \choose B^m}}{{B^{n_1} \choose B^m}}.
%\end{equation}

%A more accurate description of the actual probability distribution of topologically destructive regions has to also take into account the dependency of drawing a particular state given the previously drawn states. Nevertheless, we can use this naive model to understand what occurs as the width of the network varies. 
Using this reasoning, note that the number of destructive regions will rapidly decrease when $n_K>>m$. Therefore, as the width increases relative to the input dimension, the probability of having a topologically destructive region goes to 0. In the following section we empirically study this relationship between layer width and manifold dimension in single layer networks.

\subsection{Narrow and wide networks have a different impact on manifold topology}

To observe the effects of network width on the existence of topologically destructive regions, we ran several single layer simulations, in which we varied the sizes of the input and output layers. We choose both the input $m$ and output $n_1$ dimensions to be \{2,5,10,25,50,100\} and performed 100 different Kaiming initializations \cite{he2015delving} for each input-output dimension combination. We uniformly sampled 1000 points from the $m$-dimensional hypercube $[-5,5]^m$ and computed the rank at each point following equation (\ref{eq: Phi rank}). To confirm whether there is a topologically destructive region, we plot the minimum rank over all such points.

As can be seen in panel A of Figure \ref{fig:Top destruction}, as the width of the output layer increases the minimal rank increases until it reaches the size of the input dimension. This convergence is slower for larger input sizes and for the cases in which $m \geq n_1$ it is guaranteed that there will be many topologically destructive regions. This means that the degree to which a random neural network contains low rank topologically destructive regions, depends on the relation between the width of the layers and the minimal embedding dimension of a manifold. 
In addition, we ran simulations in which we increased the mean of the weight distribution and observed that adding this bias to the mean severely lowered the minimal rank region of the networks, implying they became more topologically destructive. These results can be seen in panel B of Figure \ref{fig:Top destruction}.

\begin{figure}
    \centering
    \includegraphics[scale=0.85]{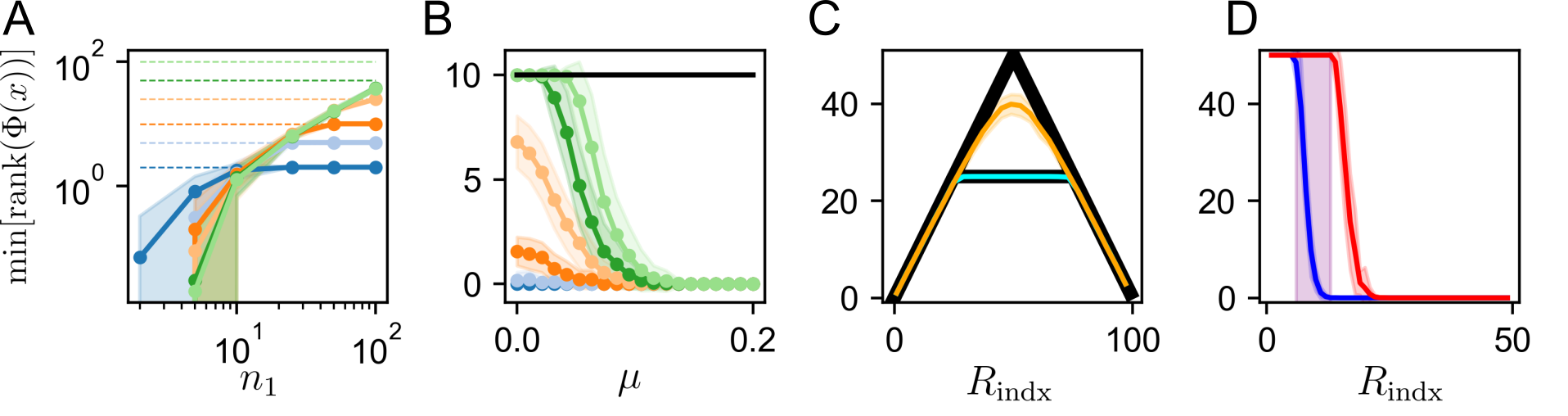}
    \caption{\textbf{Topological destruction in random neural networks} \\
    \textbf{A)} The average minimum rank region observed in 100 neural networks with Kaiming initialization increases with the width of the output ($n_1$) and is bounded by the input ($m$) layer width. Colors encode the size of the input layer with blue (m=2), light blue (m=5) orange (m=10), light orange (m=25), green (m=50), and light green(m=100). Networks with ranks below the punctured lines, matched to the colors of the input dimension, contain topologically destructive regions. 
    \textbf{B)} The minimal rank decreases as a function of the mean $\mu$ of the weight distribution (normal with $\sigma=0.1$). The input dimension here is fixed at $m=10$ (black line). The colors denote networks with different output widths $n_1$ following the same sequence as in panel \textbf{A)}.
    \textbf{C)} The minimal rank region of a neural network with 25  (cyan) or 50 (orange) input neurons with strictly positive weights multiplied by a rank selection matrix scales sublinearly as a function of the index of $R$. The two black curves show the bounding functions $\min\{R_{\text{indx}}, n_1-R_{\text{indx},m}\}$ for the two corresponding networks, with $n_1=100$.
    \textbf{D)} The minimal (blue) and maximal (red) ranks of a balanced connectivity (inhibitory connections are 4 times as strong as excitatory ones) network following Dale's principle with 50 input and 100 output neurons, evaluated over strictly positive samples. The region highlighted in purple represents the regime in which there are both topologically destructive as well as topologically preserving regions.}
    \label{fig:Top destruction}
\end{figure}

\subsection{Construction for topology destroying wide networks}
What this points towards is that narrow networks are more likely to change the topology of an input manifold than wide networks. However, we also saw that by increasing the mean of the weight distribution, we get networks that contain regions of lower rank (and are therefore more topologically destructive). Thus, we know that by choosing the right weight distribution, we can construct wide neural networks with topologically destructive regions. One simple example is sampling the weights from the uniform distribution on the interval $[-1,0]$ and setting the bias to 0. Such a network guarantees that there will be a 0 rank region in $\mathbb{R}^m$, since any point with strictly positive values will fall in the non-injective region of the ReLU activation.\\

%Let us generalize this example a bit further, by showing that selecting the weights such that the sign in each column is constant guarantees that there will be rank 0 regions. The more neuroscientifically inclined reader, will notice that this condition is the equivalent of imposing Dale's principle, in which outgoing connections have the same sign. To perform the construction, start by defining a weight matrix $W$ for which all entries in a column have the same sign. Define the column-wise sign of W as $\text{sign}(W) = [\pm 1,\pm 1,...,\pm 1]$, depending on the sign of the entries in a column. This function outputs a vector of size $m$. Now select a vector $v = -\text{sign}(W)$, multiplying this vector by $W$, will always output a strictly negative output in $\mathbb{R}^{n_1}$, which is afterwards mapped to 0 by the ReLU activation. The addition of a bias complicates things slightly, but since the input space is not restricted, one can always multiply $v$ by a large enough constant $c$ to cancel out the added bias.

This guarantees the existence of rank 0 regions, and we can run the same argument to construct regions of a specific rank. First let us define a rank selection matrix,

\begin{equation}
    R_{i,j}(k,n_1) = 
    \begin{cases}
        1 \text{ if } i=j \text{ and } i\leq k,\\
        -1 \text{ if } i=j \text{ and } k < i \leq n_1, \\
        0 \text{ otherwise}.
    \end{cases}
\end{equation}

If we multiply the strictly negative weights by this matrix from the left $R_{i,j}(k,n_1)W$, we guarantee that there will be $k$ neurons which will be active at any strictly positive point, therefore the rank of this region will now be $k$. However, that also implies that if we flip the sign of such a point, there will be $n_1-k$ active neurons. Thus, starting with strictly positive weights, the minimal rank of such a network will be upper bounded by $\min\{k,n_1-k,m\}$. We verify this bound in panel C of Figure \ref{fig:Top destruction}. The more neuroscientifically inclined reader, will notice that this condition is the equivalent of imposing a modified Dale's principle, in which the incoming, rather than the outgoing, connections to a neuron have the same sign.

While we did this construction for a single layer, the generalization to deeper networks is quite straightforward. To see this, note that $\Phi^1$ maps any input to $\mathbb{R}^{{n_1}^+}$, so choosing the weights in the next layer from a distribution on $[-1,0]$ will similarly lead to a reduction in the rank. Furthermore, deeper layers in artificial and biological neural networks will only have positive outputs, which means that increasing inhibition will usually lead to lower ranks and more topological destruction. 

To study this more biologically relevant case, we also ran simulations in which we looked at the ranks produced by networks following Dale's principle (which is equivalent to multiplying $W$ from the right or $WR_{i,j}(k,m)$). As can be seen from panel D in Figure \ref{fig:Top destruction}, strongly excitatory networks (before the purple region) have no topologically destructive regions. Balanced, meaning there is equal excitation and inhibition \cite{van1996chaos, deneve2016efficient}, networks (in the purple region) have both topologically preserving as well as destructive regions. Finally, strongly inhibitory networks (after the purple region) only have topologically destructive regions.

\section{How topologically destructive are neural networks after training?}

So far we have focused only on studying randomly initialized networks, but the real power of neural networks comes from our ability to train them to perform complex tasks. Therefore, in this section we study what happens to the ranks of the maps in trained neural networks. We start off by training a simple feedforward neural network on MNIST and plot the histogram of the ranks on the test data. The dimensionality of the MNIST manifold is estimated to be between 7 and 13 \cite{pope2021intrinsic}. As a result we cannot determine the precise integer value at which topology is destroyed. However, we can still observe whether after training, the ranks around the test samples are lower than before that. To test this, we randomly initialized a feedforward neural network with layers of widths \{2000, 1000, 500, 100\} and trained for 50 epochs with a batch size of 64, using the Adam optimizer \cite{kingma2014adam} with default parameters and a cross-entropy loss. We achieved a testing accuracy of $> 98.4\%$.

As can be seen in panels A and B of Figure \ref{fig:training_ranks}, before training, both MNIST and random noise samples end up in regions of average rank around the layer width divided by two. However, during and after training the MNIST samples end up in regions of much lower rank, with an average close to the lower bound of the dimension estimate for MNIST. A similar, but much less extreme reduction in rank can be seen in the regions to which random samples are projected after training. Thus we can think of the classification training process as a search for a polytope decomposition which implements a more topologically destructive map. 

The exploration for more topologically destructive maps makes sense in the context of classification problems in which the goal is to disentangle the input data into discrete classes \cite{naitzat2020topology}. However, when it comes to regression tasks, this intuition need not apply. If anything it would make sense to avoid badly behaved maps. To empirically confirm this we trained neural networks to approximate discretizations of the sphere function $\sum\limits_{i=1}\limits^{d}x_i^2$ with $d=20$. This was done by defining the n-th discretization as $f_n(x) = k/n$ where $P_{k/n}\leq f(x) <P_{(k+1)/n}$ and $P_k$ is the k-th percentile of the function. We uniformly sampled 10000 points from the interval $[-5.12,5.12]^{20}$ and generated the corresponding z-scored $\{2,3,4,5,8,\infty\}$-th discretizations. We trained 10 networks with layers of width \{160,80,40\} per discretization. A batch size of 64 was used along with the Adam optimizer with a learning rate of 0.001. The mean squared error (MSE) was used as the loss.

For better visibility, we computed the codeword size instead of the rank. With this measure any codeword size under 20 (which is the dimension of the input) is guaranteed to be topologically destructive, whereas entries higher or equal to that are not. As can be seen in panel C of Figure \ref{fig:training_ranks}, the lower discretizations, $f_2$ and $f_3$ specifically, which are more similar to classification tasks, on average had regions of lower codeword size compared to the pre-learning baseline. However, when the continuous regression version $f_\infty$ of this task is reached the codeword size of the regions was actually increased on average. This result supports the idea that neural networks solve classification tasks by orienting samples in topologically destructive regions. Regression tasks, on the other hand, are better solved in a higher rank regime, where smoothness is desired. We also explore the rank distributions of several other standard high-dimensional optimization functions taken from  \url{http://www.sfu.ca/~ssurjano} \cite{surj2013} in the appendix \ref{Other regressions}.

\begin{figure}
    \centering
    \includegraphics[scale=0.85]{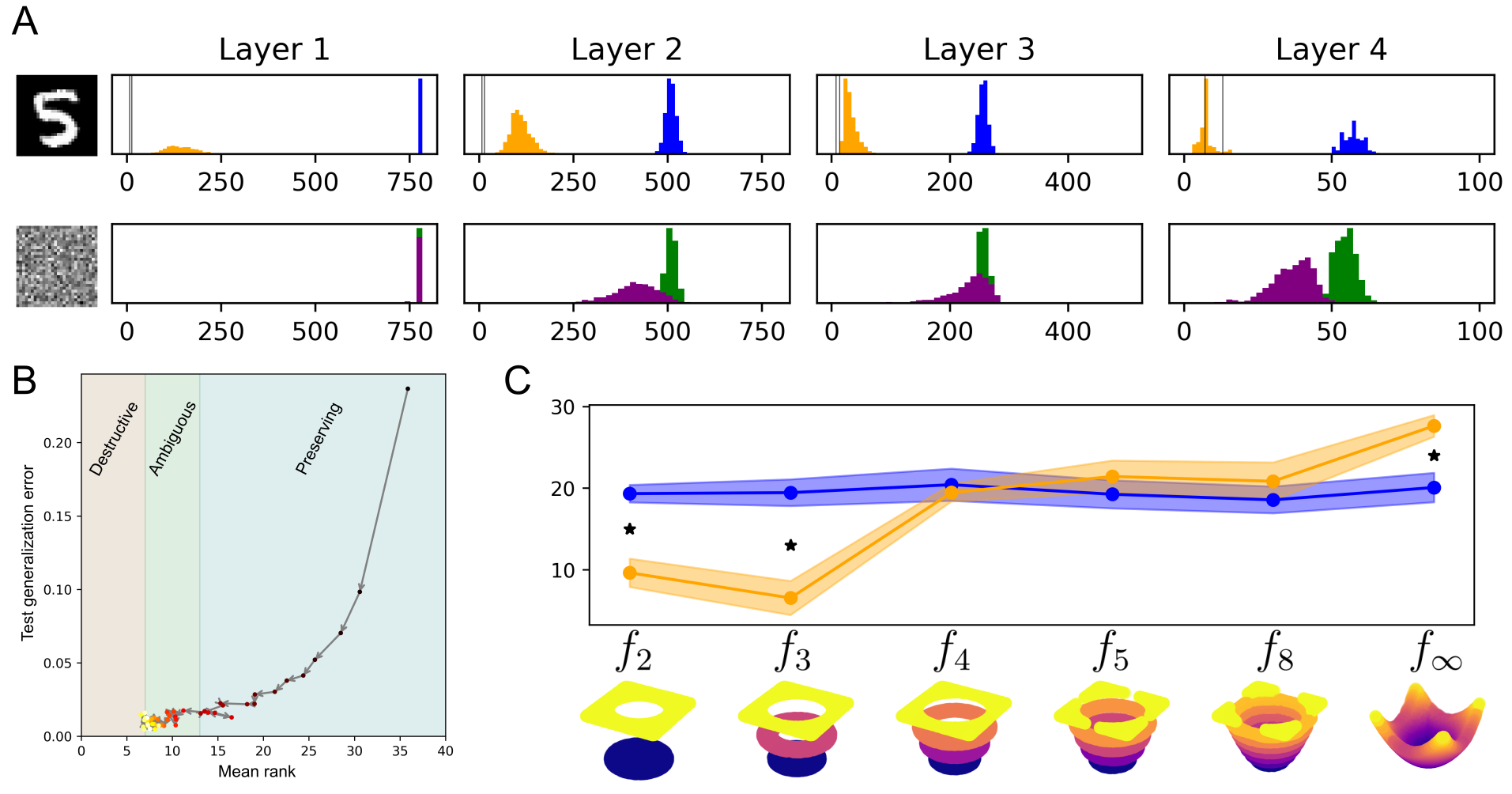}
    \caption{\textbf{Trained neural networks partition the input space in different ways for classification and regression problems.}\\
    \textbf{A}) The top row shows histograms of the ranks across layers for MNIST samples before (blue) and after (orange) training. The two vertical black lines show the intrinsic dimensionality of the MNIST dataset estimated by \cite{pope2021intrinsic}. The row beneath shows the same, but for normally distributed noise samples instead of MNIST digits before (green) and after (purple) training.
    \textbf{B}) A drop in generalization, quantified in terms of the cross-entropy loss on the test data, corresponds to a reduction in the rank of the average region in layer 4. Since the true dimensionality of MNIST is not known, we highlight the region where topological destruction might start (7-13) according to the dimension estimate in \cite{pope2021intrinsic}. As training proceeds the mean rank converges towards the lower bound of this estimate, indicating that in this case optimal performance is achieved by identifying topologically destructive regions.
    \textbf{C)}  The average codeword size increases as a function of the number of values in the output. Plots of two dimensional functions discretized at different levels of resolution. Stars indicate a Bonferroni corrected significant difference (at $p < 0.001$) between post (orange) and pretraining (blue) distributions using a Wilcoxon rank-sum test.}
    \label{fig:training_ranks}
\end{figure}

\section{Discussion}
Ideas from geometry and topology have undergone an increase in popularity in both machine learning and neuroscience. In this work we have attempted to build a bridge between the structure of a neural network and the way in which it can change the topology of an input manifold. For this purpose we have made use of the relative homology sequence, which is a fundamental tool in algebraic topology \cite{hatcher2005algebraic}, but to our knowledge, with the exception of a few works \cite{pokorny2016topological, blaser2022relative}, has not received as much attention in machine learning, neuroscience and data analysis. By constructing the rank decomposition of a neural network and thereby separating the input manifold into subregions, we are able to identify those regions where topological changes could occur. Then the problem of how the topology of the input changes, reduces to the problem of how these regions are projected to hyperplanes of different dimension lower than the input dimension.

Our theory is easily applicable to regions of rank zero and one. However, higher rank regions pose a problem, in that projections of such subregions, need not be contractible and therefore the relative homology sequence is not straightforwardly applicable. Future work, can attempt to shine light on how one should approach these types of projections. By focusing on the contractible case, we have conjectured that this classification can be described by considering the quotient spaces generated by submanifolds of the original space. We are hopeful that these ideas will be explored further in future work. 

We also evaluated the degree of topological destruction in random neural networks and showed that narrow networks lead to topological destruction whereas wide ones preserve the topology. This principle can be broken, either by biasing the mean of the weight distribution or by making the rows of the weight matrix to be of consistent sign. Assuming that the input has already passed through one ReLU layer, this sign-consistency condition reduces to having only negative weights.

The minimum rank region of a neural network can be manipulated by changing the sign of the rows of the weight matrix. On the other hand changing the sign of the columns, preserves Dale's principle and in a balanced regime, guarantees that there will be both topologically destructive as well as preserving regions. Such connectivity patterns, have also been associated with optimal distance-preserving low-dimensional projections \cite{allen2014sparse}. We believe that another fascinating future avenue of research would be to explore balanced rate-based recurrent neural networks obeying Dale's principle. Such networks, could potentially provide us with insight about how neural representations change through time and thus could correspond to different stages of processing during the solution of a task. 

Finally, we study the appearance of topologically destructive regions during neural network training. We empirically observe that a decrease in generalization error is assosiated with a corresponding decrease in the average rank of a network trained on MNIST. This behavior is also consistently observed on other classification tasks, whereas the opposite seems to be true for regression tasks. Future work could focus on studying this type of behavior in more complex tasks and network architectures. 
%Johnson-Lindenstrauss lemma and sign-consistent matrices for low dimensional manifolds and the brain.
%Dale's law enforces lack of topological destruction
%Regression uses smooth maps, classification uses topological destruction, this is important to look for in brains and other ML tasks.
%Future questions in topology which can be answered by people with more experience in the field.

\section{Acknowledgements}
We wish to thank Jonas Verhellen and Krubeal Danieli for their insightful input regarding the figures and the content of this manuscript. This research was funded by the European Union’s Horizon 2020 research and innovation programme under the Marie Skłodowska-Curie grant agreement Nº 945371 and the University of Oslo.

\bibliographystyle{plain}
\bibliography{main}

\newpage

\section{Appendix}
\label{sec: appendix}

\subsection{Code availability}
All of our code is openly available at \url{https://github.com/KBeshkov/top-mfld-class/tree/main}.

\subsection{Topology changes in low rank regions}
\label{Topology destructive regimes}

\begin{theorem}
    \label{Theorem 1}
    Given a manifold $\mathcal{M} \subset \mathbb{R}^m$, with a minimal embedding dimension of $m$ and a linear transformation $T:\mathbb{R}^m \to \mathbb{R}^n$, $T|_{\mathcal{M}}:\mathcal{M} \to T(\mathcal{M})$ is a homeomorphism iff $\text{rank}(T) \geq m$. 
\end{theorem}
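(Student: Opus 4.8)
The plan is to prove the two implications separately: the forward direction is elementary linear algebra, while the reverse direction rests entirely on the meaning of ``minimal embedding dimension,'' so I would state that definition explicitly at the outset (the least $k$ for which $\mathcal{M}$ admits a topological embedding into $\mathbb{R}^{k}$).

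For the ``if'' direction, suppose $\operatorname{rank}(T)\ge m$. Since $T:\mathbb{R}^{m}\to\mathbb{R}^{n}$ has rank at most $m$, it has rank exactly $m$ and is therefore injective as a linear map. It is continuous, and it restricts to a linear bijection onto the subspace $V:=T(\mathbb{R}^{m})\subset\mathbb{R}^{n}$; because $V$ is finite-dimensional, the inverse $V\to\mathbb{R}^{m}$ is again linear, hence continuous (equivalence of norms on finite-dimensional spaces, or the open mapping theorem). Thus $T:\mathbb{R}^{m}\to V$ is a homeomorphism, and the restriction of a homeomorphism to any subset is a homeomorphism onto its image for the subspace topologies, so $T|_{\mathcal{M}}:\mathcal{M}\to T(\mathcal{M})$ is a homeomorphism. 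The only nonroutine point to spell out here is continuity of the inverse.

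For the ``only if'' direction I would argue by contraposition: assume $r:=\operatorname{rank}(T)<m$ and show $T|_{\mathcal{M}}$ cannot be a homeomorphism. The image $V:=T(\mathbb{R}^{m})$ is an $r$-dimensional linear subspace of $\mathbb{R}^{n}$, and a choice of basis gives a linear homeomorphism $V\cong\mathbb{R}^{r}$. Since $T(\mathcal{M})\subseteq V$, if $T|_{\mathcal{M}}$ were a homeomorphism onto $T(\mathcal{M})$, then composing with $V\cong\mathbb{R}^{r}$ would exhibit $\mathcal{M}$ as homeomorphic to a subset of $\mathbb{R}^{r}$ with $r<m$, contradicting minimality of the embedding dimension $m$. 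Hence no rank-deficient $T$ restricts to a homeomorphism.

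The main obstacle is precisely the definition issue flagged above. With ``minimal embedding dimension'' read as the least target dimension of a topological embedding, the contradiction in the converse is immediate. If instead one only assumes that $\mathcal{M}$ is an $m$-dimensional manifold sitting in $\mathbb{R}^{m}$, one must additionally invoke invariance of domain (invariance of dimension) to rule out embedding an $m$-manifold into $\mathbb{R}^{r}$ for $r<m$; I would therefore fix the definition first so this step is not needed. A secondary remark worth including is that the statement concerns topological equivalence only, so no compactness or smoothness hypothesis on $\mathcal{M}$ is required here.
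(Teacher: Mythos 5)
Your proposal is correct, and for the converse direction it takes a genuinely different (and in fact cleaner) route than the paper. The paper argues via the rank--nullity theorem applied to $T|_{\mathcal{M}}$, writing $\operatorname{rank}(T|_{\mathcal{M}}) + \dim(\ker T|_{\mathcal{M}}) = \dim(\mathcal{M})$ and concluding that a rank below $m$ forces a non-trivial kernel and hence non-injectivity; this treats $\mathcal{M}$ as if it were a vector space and, taken literally, only shows that $T$ is non-injective on $\mathbb{R}^m$, which does not by itself rule out $T$ being injective on the subset $\mathcal{M}$ (two points of $\mathcal{M}$ need not differ by a kernel vector). Your argument sidesteps this by invoking the hypothesis that actually does the work: if $\operatorname{rank}(T) = r < m$, then $T(\mathcal{M})$ lies in an $r$-dimensional subspace, so a homeomorphism $\mathcal{M} \to T(\mathcal{M})$ would embed $\mathcal{M}$ topologically into $\mathbb{R}^r$, contradicting the minimality of the embedding dimension $m$. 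Your forward direction (rank exactly $m$ gives a linear bijection onto the image with continuous inverse, which restricts to a homeomorphism on any subset) matches the paper's in substance, with the continuity of the inverse spelled out more carefully. The net effect is that your proof buys a logically complete converse at the cost of relying explicitly on the topological definition of minimal embedding dimension, which you rightly flag as the point that must be fixed at the outset.
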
 

\begin{proof}
    For $T|_{\mathcal{M}}$ to be a homeomorphism, it has to be bijective, continuous and its (pseudo)inverse $T^{-1}$ also has to be continuous.  Since $T$ and $T^{-1}$ are both linear transformations between finite dimensional topological vector spaces and all linear transformations between finite dimensional topological vector spaces are continuous, continuity follows in both directions. $T|_{\mathcal{M}}$ is surjective by construction, therefore the only property that we have to prove is injectivity. Using the rank-nulity theorem, we can write:

    \begin{equation*}
        \text{rank}(T|_{\mathcal{M}}) + \dim(\ker T|_{\mathcal{M}}) = \dim(\mathcal{M}).
    \end{equation*}
    Injectivity follows only when $\dim(\ker T|_{\mathcal{M}})=0$. If we assume that $T$ has rank $n$, then rearranging the terms of the equation above we get,
    
    \begin{equation*}
        \dim(\ker T|_{\mathcal{M}}) = m-n.
    \end{equation*}
    Which shows that as long as $\text{rank}T|_{\mathcal{M}} = n$ is smaller than $\dim \mathcal{M}$, the kernel will be fail to be empty and the map will not be a homeomorphism. If it is equal to $m$, then the kernel is empty and the map is injective and therefore homeomorphic. \\
\end{proof}

\begin{theorem}
    \label{Theorem 2}
    Take a neural network with a rank decomposition $\Omega$ which implements a map $\Phi$ and an input manifold $\mathcal{M} \subset \mathbb{R}^m$, with $m$ being the minimal embedding dimension of the manifold. If $\Omega_n \cap \mathcal{M} = \mathcal{M}$ for $n<m$, then $\Phi$ is non-homeomorphic.
\end{theorem}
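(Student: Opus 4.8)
The plan is to reduce the claim to non-injectivity of $\Phi|_{\mathcal{M}}$: since a homeomorphism is in particular injective, it suffices to produce two distinct points of $\mathcal{M}$ with the same image under $\Phi$. Everything rests on the polytope decomposition together with equation (\ref{eq: Phi rank}): $\mathcal{M}$ is covered by the finitely many closed convex polytopes $C_J$ that meet it, and on each such $C_J$ the map $\Phi$ coincides with a single affine map $x \mapsto L_J x + c_J$ whose linear part $L_J = \prod_k Q_{J_k} W_k$ has rank exactly $n$ by hypothesis. Hence $\Phi(C_J \cap \mathcal{M})$ sits inside the affine subspace $V_J := \operatorname{im}(L_J) + c_J$ of dimension $n$, and so $\Phi(\mathcal{M})$ lies in a finite union of $n$-dimensional affine subspaces.

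Next I would localize to a top-dimensional piece. Since $\mathcal{M}$ is a manifold, hence a Baire space, and is the union of the finitely many closed sets $C_J \cap \mathcal{M}$, at least one of them — say $C_{J_0}\cap\mathcal{M}$ — has nonempty interior in $\mathcal{M}$; inside that interior pick a subset $B$ homeomorphic to the closed $d$-ball, where $d := \dim\mathcal{M}$. On $B$ the map $\Phi$ is the restriction of the affine map $L_{J_0}(\cdot) + c_{J_0}$, so $\Phi(B) \subseteq V_{J_0}$, a single $n$-plane. If $\Phi|_B$ were injective it would be a continuous injection of a compact space into a Hausdorff space, hence an embedding of the closed $d$-ball into a subspace of $\mathbb{R}^n$; but a closed $d$-ball has covering dimension $d$ while any subspace of $\mathbb{R}^n$ has covering dimension at most $n$, which is impossible once $d > n$. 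Therefore $\Phi|_B$, and a fortiori $\Phi|_{\mathcal{M}}$, is non-injective. Equivalently, one can route the last step through Theorem \ref{Theorem 1}: an injective $\Phi|_{\mathcal{M}}$ would embed the top-dimensional chunk of $\mathcal{M}$ into the $n$-dimensional space $V_{J_0}$, contradicting that $m$ coordinates are genuinely required.

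The step I expect to be the real obstacle is the passage from the hypothesis $n < m$ to the inequality $d = \dim\mathcal{M} > n$ that the dimension count demands. When $\mathcal{M}$ is full-dimensional in its minimal ambient space we have $d = m$, so $n < m$ gives $d > n$ for free and the argument is complete. In general, however, $d$ can be strictly smaller than the minimal embedding dimension $m$ (a circle has $d = 1$, $m = 2$), and then a continuous piecewise-affine map really can reassemble the low-rank images of the individual pieces — which are spread across several distinct $n$-planes — into a homeomorphic copy of $\mathcal{M}$; this is precisely the ``cutting along the boundaries of its pieces'' behaviour flagged in the main text. Closing that case needs more than the rank on any single polytope: one must either sharpen the covering-dimension bound on $\Phi(\mathcal{M})$ as a finite union of $n$-planes and combine it with the minimality of $m$, or exploit that the linear parts of polytopes sharing a facet differ by rank at most one and argue that such gluings cannot assemble an embedding of the required embedding dimension. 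I would isolate this reconciliation as the crux; the affine-piece identification and the localization to a full-dimensional chunk are routine once it is settled.
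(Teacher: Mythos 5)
Your argument follows essentially the same line as the paper's, but carries it out with far more care. The paper's entire proof is two sentences: $\Phi$ applies rank-$n$ affine maps that project subsets of $\mathcal{M}$ into $\mathbb{R}^n$, and since there is no embedding of $\mathcal{M}$ into $\mathbb{R}^n$ (which is what minimality of $m$ means), $\Phi$ cannot be a homeomorphism. Your identification of the affine pieces via equation (\ref{eq: Phi rank}), the localization to a polytope whose intersection with $\mathcal{M}$ has nonempty interior, and the covering-dimension argument that a closed $d$-ball admits no continuous injection into an $n$-plane when $d>n$ are all correct, and they settle the case $\dim\mathcal{M}=m$ completely --- more rigorously than the paper does.

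The obstacle you flag in your final paragraph is genuine, and it should be said plainly that the paper's proof does not resolve it either. When $\dim\mathcal{M}<m$, the image $\Phi(\mathcal{M})$ lies in a finite union of $n$-dimensional affine subspaces of $\mathbb{R}^{n_L}$, not in a single copy of $\mathbb{R}^n$, and a union of $n$-planes can contain embedded copies of manifolds whose minimal embedding dimension exceeds $n$: a closed polygonal curve is a homeomorphic copy of $S^1$ sitting inside a union of lines. So the assertion ``there is no embedding of $\mathcal{M}$ into $\mathbb{R}^n$'' does not by itself exclude $\Phi|_{\mathcal{M}}$ being injective; closing that case requires an additional global ingredient, e.g.\ that continuity forces the image of a facet shared by two polytopes with distinct image planes into the (lower-dimensional) intersection of those planes, creating collapses wherever $\mathcal{M}$ crosses such a facet more than once. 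You correctly isolate this as the crux rather than eliding it. In short: your proof is complete exactly where the paper's is complete, and the gap you acknowledge is an unacknowledged gap in the paper's own proof as well.
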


\begin{proof}
    $\Phi$ applies a set of rank $n$ affine maps on $\mathcal{M}$ which project subsets of $\mathcal{M}$ to $\mathbb{R}^n$. There is no embedding of $\mathcal{M}$ into $\mathbb{R}^n$, therefore $\Phi$ cannot be a homeomorphism.
\end{proof}

\begin{corollary}
    \label{Corollary 1}
    $\Phi$ is non-homeomorphic on any submanifold of $\mathcal{M}$ with the same embedding dimension, which intersects $\Omega_n$ for $n<m$.
\end{corollary}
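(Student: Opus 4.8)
The plan is to mimic the proof of Theorem~\ref{Theorem 2}, localized to the submanifold and to a single polytope of the decomposition. Write $\mathcal{N}\subseteq\mathcal{M}$ for the submanifold in question, and assume, as in the statement, that its minimal embedding dimension is still $m$ and that $\mathcal{N}$ meets the rank-$n$ region $\Omega_n$ with $n<m$. First I would pick a codeword region $C_J$ with $\text{rank}(\Phi|_{C_J})=n$ that $\mathcal{N}$ genuinely enters, and restrict attention to $\mathcal{N}':=\mathcal{N}\cap C_J$; by hypothesis $\mathcal{N}'$ retains embedding dimension $m$ (this is precisely the role of the clause ``with the same embedding dimension'', and it is what makes the word ``intersects'' suffice — we never need $\mathcal{N}$ to be contained in $\Omega_n$). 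On $C_J$ the map $\Phi$ is a single affine map, as recorded in equation~(\ref{eq: Phi rank}), and its linear part has rank $n$, so $\Phi(\mathcal{N}')$ lies in an $n$-dimensional affine subspace of the output space, hence embeds in $\mathbb{R}^n$.

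The second step is the contradiction, exactly as in Theorems~\ref{Theorem 1} and~\ref{Theorem 2}. Suppose $\Phi|_{\mathcal{N}}:\mathcal{N}\to\Phi(\mathcal{N})$ were a homeomorphism. Restricting a homeomorphism to a subspace (with the subspace topology) again yields a homeomorphism onto its image, so $\Phi|_{\mathcal{N}'}:\mathcal{N}'\to\Phi(\mathcal{N}')$ would be a homeomorphism; but then $\mathcal{N}'$ would embed into $\mathbb{R}^n$, contradicting that its minimal embedding dimension is $m>n$. Hence $\Phi|_{\mathcal{N}}$ is non-homeomorphic, which is the claim. Equivalently one can route the last step through Theorem~\ref{Theorem 1}: the affine piece $\Phi|_{\mathcal{N}'}$ has rank $n<m$, so by Theorem~\ref{Theorem 1} it fails to be injective, and any pair $x\neq y\in\mathcal{N}'\subseteq\mathcal{N}$ with $\Phi(x)=\Phi(y)$ already obstructs $\Phi|_{\mathcal{N}}$ from being a homeomorphism.

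The one delicate point — and the step I would be most careful to state cleanly — is the bookkeeping of embedding dimensions when passing from $\mathcal{N}$ to $\mathcal{N}\cap C_J$: minimal embedding dimension is not inherited by arbitrary subsets or even by small open charts (a chart on a torus embeds in the plane), so the corollary must really be read as requiring that the portion of the submanifold meeting the low-rank region still needs all $m$ ambient coordinates. Once that hypothesis is made explicit, everything else is immediate from Theorem~\ref{Theorem 1} and the argument of Theorem~\ref{Theorem 2}; in particular no further machinery — not even the relative homology sequence — is needed here, since we only have to exhibit a failure of injectivity rather than compute how the homology groups change.
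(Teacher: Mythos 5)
Your proposal is correct and follows the same route as the paper, whose entire proof is the one line ``apply Theorem~\ref{Theorem 2} to such submanifolds.'' The difference is that you actually carry out the reduction, and in doing so you expose a real imprecision that the paper glosses over: Theorem~\ref{Theorem 2} as stated requires the submanifold to be \emph{contained} in $\Omega_n$ (its hypothesis is $\Omega_n\cap\mathcal{M}=\mathcal{M}$), whereas the corollary only assumes the submanifold \emph{intersects} $\Omega_n$, so the cited theorem does not apply verbatim. Your localization to a single codeword region $C_J$, where $\Phi$ is one affine map of rank $n$, together with the observation that a homeomorphism restricts to a homeomorphism on $\mathcal{N}\cap C_J$, is exactly the missing bridge. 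You are also right to flag that the argument only goes through if the piece $\mathcal{N}\cap C_J$ itself still has minimal embedding dimension $m$ (or at least intrinsic dimension exceeding $n$): a small arc of a circle meeting a rank-$1$ region can be mapped injectively to a line, so mere nonempty intersection is not enough. That hypothesis is implicit in the paper's phrasing ``with the same embedding dimension'' but is doing essential work, and your version makes it explicit. In short: same approach, but your writeup is the one that actually constitutes a proof.
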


\begin{proof}
    Simply apply Theorem \ref{Theorem 2} to such submanifolds of $\mathcal{M}$.
\end{proof}

\subsection{Conjecture about higher order regions}
Targeting the problem of classifying the neural representations for high-dimensional non-trivial projections is quite difficult. Let us instead focus on classifying the cases for which the topologically destructive regions of $\Phi(\Omega^K_n)$ map to a finite set of individually contractible sets. In this case we only have to study how quotiening combinations and sequences of subspaces of $\mathcal{M}$ changes the topology of the original manifold. For this purpose, we first define the set of all possible non-homotopic subsets of $\mathcal{M}$ as $Z_\mathcal{M}$. We know that if $(X,\Omega_n)$ and $(X,\Omega_n^{'})$ are homotopy equivalent as pairs, then $H_i(X,\Omega_n) \simeq H_i(X, \Omega_n^{'})$ for any i. On the other hand, quotiening out non-homotopic subsets can lead to differences in relative homology (think of quotiening out a trivial loop vs the middle loop in a torus). \\

In general this leads to a very complicated classification problem, in which one has to consider quotiening out arbitrary non-homeomorphic subspaces of $\mathcal{M}$ along with all non-homotopic mappings of such subspaces. Not only are there just too many such subspaces, especially as one considers higher dimensional cases, but the fact that non-homotopic maps can generate different topologies makes things extremely intractable. 

In order to frame the problem in better terms and conjecture a solution, we name the set of all possible spaces which a neural network can generate, \textit{the representation classifying space},
\begin{equation}
    [W]_\mathcal{M} = \{H(\mathcal{M},z) | z \subset Z_\mathcal{M}\}.
\end{equation}

\textbf{Conjecture:} Given a manifold $\mathcal{M}$, let us call the equivalence class of all possible maps from submanifolds into $\mathcal{M}$, 
\begin{equation*}
    \mathcal{Q}_\mathcal{M}=\{[f:S \to \mathcal{M}] : S \subset \mathcal{M} \text{ and } (\mathcal{M},\text{Im}f) \text{ is pair-homotopy equivalent to } (\mathcal{M},\text{Im}g)\}.
\end{equation*}
Then $[W]_\mathcal{M}$ is generated by any sequence of unions of repetitions of elements of $\mathcal{Q}$.

\subsection{Representation classifying spaces of some manifolds}
As it has already been mentioned things become quite complex in high dimensional cases, for this reason here we try to build up some intuition by considering lower dimensional examples. 
The circle $S^1$ is one dimensional and therefore we know that all of its submanifolds are either intervals or the circle itself. In other words $\mathcal{Q}_{S^1} = \{[f: I \to S^1],[f: S^1 \to S^1]\}$. Quotiening out n disconnected intervals (or points, since both are contractible) is the same as identifying n distinct points together. By going through the long exact sequence of relative homology one can see that this generates the wedge sum of $n$ loops. To show this imagine that we have a neural network for which a rank 0 region intersects the circle in n disjoint segments, this implies that $H_0(\Omega_0)=n\mathbb{Z}$ and is trivial for higher groups. Then we can construct the long exact sequence,

\begin{equation}
    \begin{tikzcd}
        0 \arrow[r, "\partial^*"] & H_{1}(\Omega_0) \arrow[r, "\iota^*"] & H_1(S^1) \arrow[r, "j^*"] & H_1(S^1,\Omega_0) \arrow[r, "\partial^*"] & H_{0}(\Omega_0) \arrow[r, "\iota^*"] & H_0(S^1) \arrow[r, "j^*"] & 0.
    \end{tikzcd}
\end{equation}

We also know the homology groups of the circle, which are $H_0(S^1) = \mathbb{Z}$ and $H_1(S^1)=\mathbb{Z}$. Plugging all of this data into the sequence we get,

\begin{equation}
    \begin{tikzcd}
        0 \arrow[r, "\partial^*"] & 0 \arrow[r, "\iota^*"] & \mathbb{Z} \arrow[r, "j^*"] & H_1(S^1,\Omega_0) \arrow[r, "\partial^*"] & n\mathbb{Z} \arrow[r, "\iota^*"] & \mathbb{Z} \arrow[r, "j^*"] & 0.
    \end{tikzcd}
\end{equation}

The image of the first $j^*$ map is $\mathbb{Z}$, since we know that its kernel is empty by exactness $\ker j^*=\text{Im}\iota^*=0$. On the other side, we know that the image of the second $\iota^*$ is the same as the kernel of $j^*=\mathbb{Z}$, this then implies that its kernel is $(n-1)\mathbb{Z}$. Again, from exactness we know that this kernel is the same as the image of $\partial^*$. Therefore we know that $H_1(S^1,\Omega_0) = n\mathbb{Z}$. Alternatively $\Omega_0=S^1$, in which case $H_1(S^1,\Omega_0)$ is trivial. Therefore we can postulate that the \textit{representation classification} space of the circle is given by:

\begin{equation}
    [W]_{S^1} = \{(0,n\mathbb{Z},0,...)|n\geq0\}.
\end{equation}

A more interesting example is the torus, since it is higher dimensional and its fundamental group generates non-homotopic maps from the circle to the torus. Using this type of reasoning we can show that its \textit{representation classifying space} is given by quotiening out solid intervals and non-homotopic circles, or $\mathcal{Q}_{T^2} = \{[f: I^2 \to T^2], [p_0:S^1 \to T^2], [p_1:S^1 \to T^2], [p_2:S^1 \to T^2]\}$. Where $p_i$ projects to a loop on the torus which is generated by one of the three non-homotopic loops (contractible - 0, inner - 1 and outer - 2). Then after a more involved diagram chase the $[W]_{T^2}$ set is generated by,

\begin{equation}
    [W]_{T^2} = 
    \begin{cases}
        (0,(n+1)\mathbb{Z},\mathbb{Z},0,...) \text{ if n copies of } I^2,\\
        (0,(n+1)\mathbb{Z},(n+1)\mathbb{Z},0,...) \text{if n copies of trivial } S^1,\\
        (0,n\mathbb{Z},n\mathbb{Z},0,...) \text{ if n copies of outer xor inner loops } S^1,\\
    \end{cases}
\end{equation}

Combinations of these submanifolds can generate even more topologies. The effects of these combinations are listed in the following tables, where only $H_1(T^2,A_n\bigsqcup B_m)$ and $H_2(T^2,A_n\bigsqcup B_m)$ are shown, with $\bigsqcup$ denoting gluing together $n$ disjoint copies of $A$ and $m$ disjoint copies of $B$:

\begin{minipage}{0.5\textwidth}
\centering
\begin{tabular}{|c|c|c|c|}
\hline
$I_n \bigsqcup S^1_m$ & $m=1$ & $2$ & $3$ \\
\hline
$n=1$ & $(3\mathbb{Z},2\mathbb{Z})$ & $(4\mathbb{Z},3\mathbb{Z})$ & $(5\mathbb{Z},4\mathbb{Z})$ \\
\hline
$2$ & $(4\mathbb{Z},2\mathbb{Z})$ & $(5\mathbb{Z},3\mathbb{Z})$ & $(6\mathbb{Z},4\mathbb{Z})$ \\
\hline
$3$ & $(5\mathbb{Z},\mathbb{Z})$ & $(6\mathbb{Z},3\mathbb{Z})$ & $(7\mathbb{Z},4\mathbb{Z})$ \\
\hline
\end{tabular}
\end{minipage}
\begin{minipage}{0.5\textwidth}
\centering
\begin{tabular}{|c|c|c|c|}
\hline
$S^1_n \bigsqcup S^{1,1}_m$ & $m=1$ & $2$ & $3$ \\
\hline
$n=1$ & $(2\mathbb{Z},2\mathbb{Z})$ & $(3\mathbb{Z},3\mathbb{Z})$ & $(4\mathbb{Z},4\mathbb{Z})$ \\
\hline
$2$ & $(3\mathbb{Z},3\mathbb{Z})$ & $(4\mathbb{Z},4\mathbb{Z})$ & $(5\mathbb{Z},5\mathbb{Z})$ \\
\hline
$3$ & $(4\mathbb{Z},4\mathbb{Z})$ & $(5\mathbb{Z},5\mathbb{Z})$ & $(6\mathbb{Z},6\mathbb{Z})$ \\
\hline
\end{tabular}
\end{minipage}

\vspace{1cm}

\begin{minipage}{\textwidth}
\centering
\begin{tabular}{|c|c|c|c|}
\hline
$S^{1,1}_n \bigsqcup S^{1,2}_m$ & $m=1$ & $2$ & $3$ \\
\hline
$n=1$ & $(0,\mathbb{Z})$ & $(0,2\mathbb{Z})$ & $(0,3\mathbb{Z})$ \\
\hline
$2$ & $(0,2\mathbb{Z})$ & $(0,3\mathbb{Z})$ & $(0,4\mathbb{Z})$ \\
\hline
$3$ & $(0,3\mathbb{Z})$ & $(0,4\mathbb{Z})$ & $(0,5\mathbb{Z})$ \\
\hline
\end{tabular}
\end{minipage}

\vspace{1cm} 

Performing such descriptions for higher-dimensional cases is interesting, but becomes incredibly complex for higher-dimensional settings in which the number and complexity of the potential submanifolds blows up. Therefore, this problem might highly benefit from a more algorithmic treatment.

\begin{figure}
    \centering
    \includegraphics[scale=0.8]{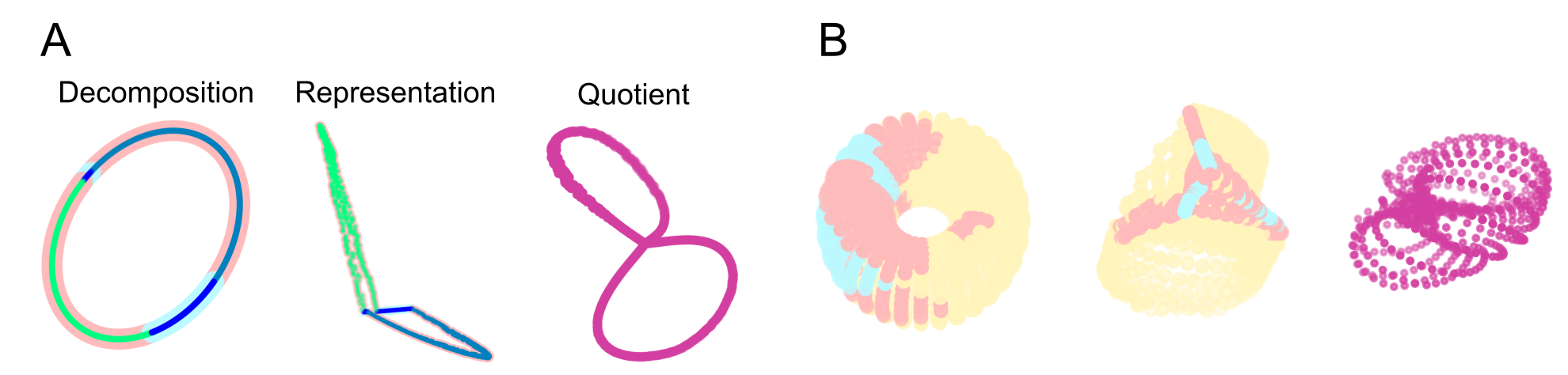}
    \caption{\textbf{A)} Polytope (inner points) and rank (outer regions) decomposition of a circle, induced by a neural network plotted on a circle (left) and on the hidden layer representation (middle). Topological representation of the manifold after low rank regions are identified (quotiened out). Multidimensional scaling was used to reduce the dimensionality in the last two plots. \textbf{B)} Same as \textbf{A)} but for a torus which is transformed with the outer loop compressed, only the rank decomposition is shown for visibility.}
    \label{fig:other-manifolds}
\end{figure}

\subsection{Topological destruction after training on other optimization functions}
\label{Other regressions}
To verify that our insights on the sphere function generalize to other settings, we trianed networks on several other optimization tasks. The two dimensional versions of the functions we use are shown in panel B of Figure \ref{fig:other regressions}. When training the network, we set the dimension $d$ to 20 and sampled 10000 uniformly distributed training and test samples from the interval the functions are defined over (this can be seen on http://www.sfu.ca/~ssurjano). Besides just fitting to these functions, we also created an alternative classification problem out of them in which, if $f(\textbf{x})\geq\hat{f}$ (with $\hat{f}$ being the mean value of a function) we set $f(\textbf{x})=1$. We set every other value to 0. 

These simulations seem to be consistent with the results in panel C of figure \ref{fig:training_ranks}, indicating that classification problems require the learning of a topologically destructive map, whereas regression problems rather lead to an increase in the average rank. The only exception was the Schwefel function for which the classification task lead to a strong reduction in rank, but the regression task did not exhibit an increase in rank.

%For better visibility, we computed the codeword size instead of the rank. With this measure any codeword size under 20 (which is the dimension of the input) is guaranteed to be topologically destructive, whereas entries higher or equal to that are likely to not be. As can be seen in Figure \ref{fig:training_ranks}, similar to the MNIST task, in classification tasks the samples are moved toward the regions of lower codeword sizes, which are more topologically destructive. However, in the regression version of this task, the codeword size of the regions was actually increased on average. This result supports the idea that neural networks solve classification tasks by orienting samples in topologically destructive regions. Regression tasks, on the other hand, are better solved in the high rank regime. The only function that did not produce this behavior was the Schwefel function, although the regression task still generated higher codeword sizes than the classification one. 

\begin{figure}
    \centering
    \includegraphics[scale=0.8]{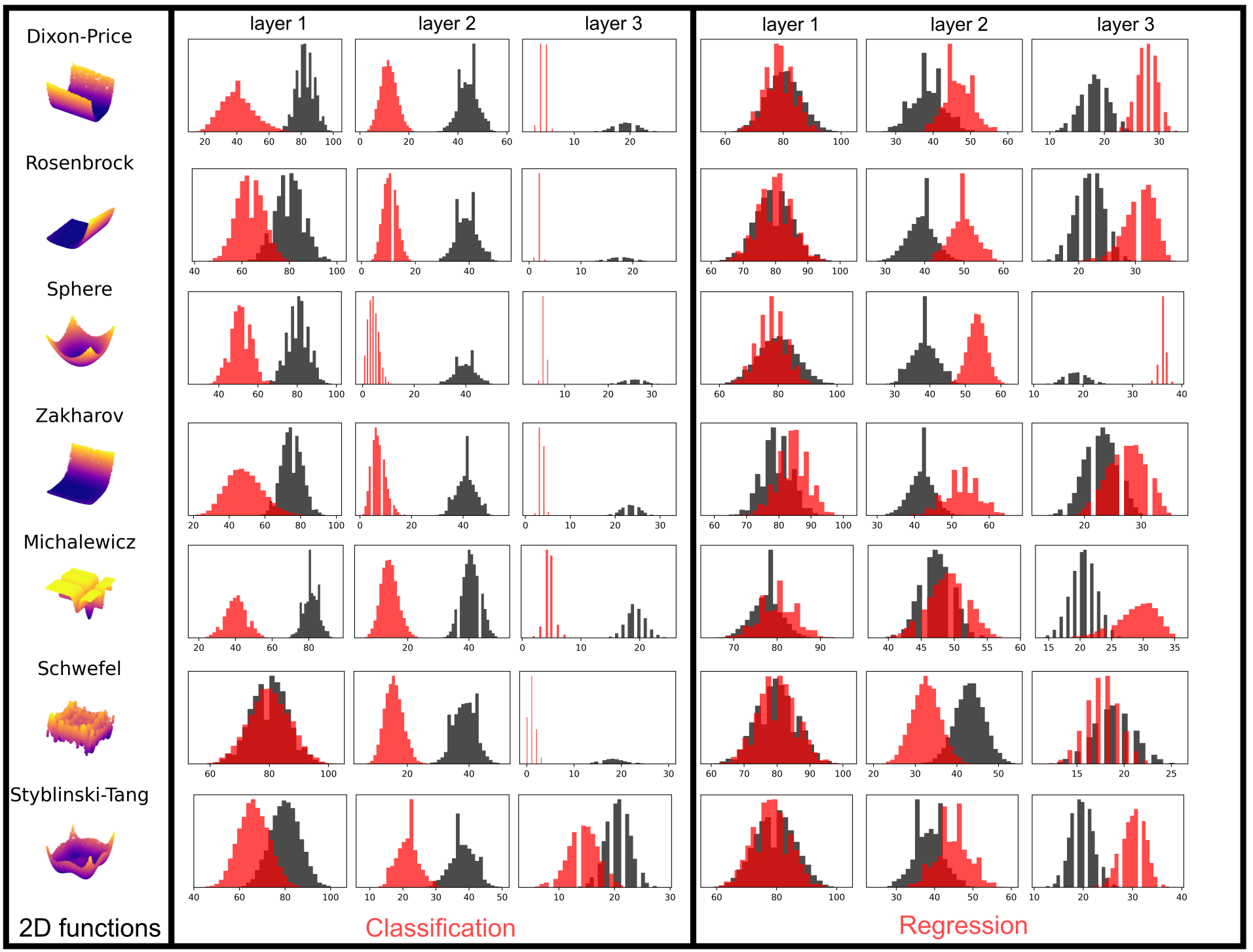}
    \caption{The first column shows pictures of the functions to which we fit in two dimensions (the actual fits are done on functions in 20 dimensions). In the classification problem case, we observe that after training (red) the samples of the test data fall in regions of codeword size much lower than before training (black). When it comes to regression, the pattern inverses with data samples falling in regions of lower codeword size before training compared to after training.}
    \label{fig:other regressions}
\end{figure}

\end{document}